\newtheorem{theorem}{Theorem}
\newtheorem{corollary}{Corollary}
\newtheorem{lemma}{Lemma}
\newcommand{\Exp}{\mathds{E}}
\newcommand{\Prob}{\mathds{P}}
\newcommand{\Ind}{\mathds{1}}
\newcommand{\states}{\mathcal{S}}
\newcommand{\actions}{\mathcal{A}}
\newcommand{\history}{\sigma ({H}_{t_k}) }
\newcommand{\opt}{M^*}
\newcommand{\sampled}{{M_k}}
\newcommand{\optPol}{\mu^{*}}
\newcommand{\sampledPol}{\mu_{k}}
\newcommand{\bellmanSampled}{\mathcal{T}_{\mu_{k}(\cdot,i)}^{k}}
\newcommand{\bellmanTrue}{\mathcal{T}_{\mu_{k}(\cdot,i)}^{*}}
\newcommand{\bellmanSampledA}{\mathcal{T}_{\mu_{k}(\cdot,1)}^{k}}
\newcommand{\bellmanTrueA}{\mathcal{T}_{\mu_{k}(\cdot,1)}^{*}}
\title{(More) Efficient Reinforcement Learning via \\ Posterior Sampling}
\author{
Ian Osband\\
Stanford University\\
Stanford, CA 94305 \\
\texttt{iosband@stanford.edu} \\
\And
Benjamin Van Roy \\
Stanford University\\
Stanford, CA 94305 \\
\texttt{bvr@stanford.edu} 
\And
Daniel Russo\\
Stanford University\\
Stanford, CA 94305 \\
\texttt{djrusso@stanford.edu} \\
}
\begin{document}
\maketitle

\begin{abstract}

Most provably-efficient reinforcement learning algorithms introduce optimism about poorly-understood states and actions to encourage exploration. We study an alternative approach for efficient exploration: \emph{posterior sampling for reinforcement learning} (PSRL). This algorithm proceeds in repeated episodes of known duration. At the start of each episode, PSRL updates a prior distribution over Markov decision processes and takes one sample from this posterior. PSRL then follows the policy that is optimal for this \emph{sample} during the episode. The algorithm is conceptually simple, computationally efficient and allows an agent to encode prior knowledge in a natural way. We establish an $\tilde{O}(\tau S \sqrt{AT} )$ bound on expected regret, where $T$ is time, $\tau$ is the episode length and $S$ and $A$ are the cardinalities of the state and action spaces. This bound is one of the first for an algorithm not based on optimism, and close to the state of the art for any reinforcement learning algorithm. We show through simulation that PSRL significantly outperforms existing algorithms with similar regret bounds.

\end{abstract}


\section{Introduction}
We consider the classical reinforcement learning problem of an agent interacting with its environment while trying to maximize total reward accumulated over time \cite{burnetas1997optimal, kumar1986stochastic}. The agent's environment is 
modeled as a Markov decision process (MDP), but the agent is uncertain about the true dynamics of the MDP. As the agent interacts with its environment, it observes the outcomes that result from previous states and actions, and learns about the system dynamics. 
This leads to a fundamental tradeoff: by exploring poorly-understood states and actions the agent can learn to improve future performance, but it may attain better short-run performance by exploiting its existing knowledge.

Na\"ive optimization using point estimates for unknown variables overstates an agent's knowledge, and can lead to premature and suboptimal exploitation. To offset this, the majority of provably efficient learning algorithms use a principle known as \emph{optimism in the face of uncertainty} \cite{lai1985asymptotically} to encourage exploration. In such an algorithm, each state and action is afforded some optimism bonus such that their value to the agent is modeled to be as high as is statistically plausible. The agent will then choose a policy that is optimal under this ``optimistic" model of the environment. This incentivizes exploration since poorly-understood states and actions will receive a higher optimism bonus. As the agent resolves its uncertainty, the effect of optimism is reduced and the agent's behavior approaches optimality. Many authors have provided strong theoretical guarantees for optimistic algorithms \cite{jaksch2010near,bartlett2009regal,brafman2003r,kakade2003sample,kearns2002near}. In fact, almost all reinforcement learning algorithms with polynomial bounds on sample complexity employ optimism to guide exploration.

We study an alternative approach to efficient exploration, \emph{posterior sampling}, and provide finite time bounds on regret.  We model the agent's initial uncertainty over the environment through a prior distribution.\footnote{For an MDP, this might be a prior over transition dynamics and reward distributions.}
At the start of each \emph{episode}, the agent chooses a new policy, which it follows for the duration of the episode. Posterior sampling for reinforcement learning (PSRL) selects this policy through two simple steps. First, a single instance of the environment is sampled from the posterior distribution at the start of an episode. Then, PSRL solves for and executes the policy that is optimal under the sampled environment over the episode. PSRL randomly selects policies according to the probability they are optimal; exploration is guided by the variance of sampled policies as opposed to optimism.

The idea of posterior sampling goes back to 1933 \cite{thompson1933} and has been applied successfully to multi-armed bandits. In that literature, the algorithm is often referred to as {\it Thompson sampling} or as {\it probability matching}. Despite its long history, posterior sampling was largely neglected by the multi-armed bandit literature until empirical studies \cite{chapelle2011empirical, scott2010modern} demonstrated that the algorithm could produce state of the art performance. This prompted a surge of interest, and a variety of strong theoretical guarantees are now available \cite{agrawal2012further, agrawal2013linear, kaufmann2012thompson, russo2013}. Our results suggest this method has great potential in reinforcement learning as well.

PSRL was originally introduced in the context of reinforcement learning by Strens \cite{strens2000bayesian} under the name ``Bayesian Dynamic Programming'',\footnote{We alter terminology since PSRL is neither Bayes-optimal, nor a direct approximation of this.}
where it appeared primarily as a heuristic method.  In reference to PSRL and other ``Bayesian RL'' algorithms, Kolter and Ng \cite{kolter2009near} write ``little is known about these algorithms from a theoretical perspective, and it is unclear, what (if any) formal guarantees can be made for such approaches.''
Those Bayesian algorithms for which performance guarantees exist are guided by optimism. BOSS \cite{wang2005bayesian} introduces a more complicated version of PSRL that samples many MDPs, instead of just one, and then combines them into an \emph{optimistic} environment to guide exploration. BEB \cite{kolter2009near} adds an exploration bonus to states and actions according to how infrequently they have been visited. We show it is not always necessary to introduce optimism via a complicated construction, and that the simple algorithm originally proposed by Strens \cite{strens2000bayesian} satisfies strong bounds itself.

Our work is motivated by several advantages of posterior sampling relative to optimistic algorithms. First, since PSRL only requires solving for an optimal policy for a single sampled MDP, it is computationally efficient both relative to many optimistic methods, which require simultaneous optimization across a \emph{family} of plausible environments \cite{jaksch2010near,bartlett2009regal,wang2005bayesian}, and to computationally intensive approaches that attempt to approximate the Bayes-optimal solutions directly \cite{wang2005bayesian,guez2012efficient, asmuth2011approaching}. Second, the presence of an explicit prior allows an agent to incorporate known environment structure in a natural way. This is crucial for most practical applications, as learning without prior knowledge requires exhaustive experimentation in each possible state. Finally, posterior sampling allows us to separate the \emph{algorithm} from the \emph{analysis}. In any optimistic algorithm, performance is greatly influenced by the manner in which optimism is implemented. Past works have designed algorithms, at least in part, to facilitate theoretical analysis for toy problems. Although our analysis of posterior sampling is closely related to the analysis in \cite{jaksch2010near}, this worst-case bound has no impact on the algorithm's actual performance. In addition, PSRL is naturally suited to more complex settings where design of an efficiently optimistic algorithm might not be possible. We demonstrate through a computational study in Section 6 that PSRL outperforms the optimistic algorithm UCRL2 \cite{jaksch2010near}: a competitor with similar regret bounds over some example MDPs.

\section{Problem formulation}

We consider the problem of learning to optimize a random finite horizon MDP $M = (\states, \actions, R^M, P^M, \tau, \rho)$ in repeated finite episodes of interaction. $\mathcal{S}$ is the state space, $\mathcal{A}$ is the action space, $R^M_a(s)$ is a probability distribution over reward realized when selecting action $a$ while in state $s$ whose support is $[0,1]$, $P^M_a(s'|s)$ is the probability of transitioning to state $s'$ if action $a$ is selected while at state $s$, $\tau$ is the time horizon, and $\rho$ the initial state distribution.  
We define the MDP and all other random variables we will consider with respect to a probability space $(\Omega, \mathcal{F}, \mathbb{P})$.  We assume $\mathcal{S}$, $\mathcal{A}$, and $\tau$ are deterministic so the agent need not learn the state and action spaces or the time horizon.

A deterministic policy $\mu$ is a function mapping each state $s \in \states$ and $i = 1,\ldots,\tau$ to an action $a \in \actions$.
For each MDP $M = (\mathcal{S}, \mathcal{A}, R^M, P^M, \tau, \rho)$ and policy $\mu$, we define a value function
$$V^{M}_{\mu, i}(s) := \Exp_{M,\mu}\left[ \sum_{j=i}^{\tau} \overline{R}^M_{a_j}(s_j) \Big| s_i = s \right],$$
where $\overline{R}^M_a(s)$ denotes the expected reward realized when action $a$ is selected while in state $s$, and the subscripts of the expectation operator indicate that $a_j = \mu(s_j, j)$, and $s_{j+1} \sim P^M_{a_j}(\cdot| s_j)$ for $j = i, \ldots, \tau$.  A policy $\mu$ is said to be optimal for MDP $M$ if $V^{M}_{\mu, i}(s) = \max_{\mu'} V^{M}_{\mu', i}(s)$ for all $s \in \states$ and $i=1,\ldots,\tau$. We will associate with each MDP $M$ a policy $\mu^M$ that is optimal for $M$. 

The reinforcement learning agent interacts with the MDP over episodes that begin at times $t_k = (k-1) \tau + 1$, $k=1,2,\ldots$.
At each time $t$, the agent selects an action $a_t$, observes a scalar reward $r_t$, and then transitions to $s_{t+1}$.
If an agent follows a policy $\mu$ then when in state $s$ at time $t$ during episode $k$, it selects an action $a_t=\mu(s, t - t_k)$.
Let $H_t = (s_1,a_1,r_1,\ldots,s_{t-1},a_{t-1},r_{t-1})$ denote the history of observations made \emph{prior} to time $t$.  
A reinforcement learning algorithm is a deterministic sequence $\{\pi_k | k = 1, 2, \ldots\}$ of functions, each mapping $H_{t_k}$ to a probability distribution $\pi_{k}(H_{t_k})$ over policies. At the start of the $k$th episode, the algorithm samples a policy $\mu_{k}$ from the distribution $\pi_{k}(H_{t_k})$. The algorithm then selects actions $a_{t}=\mu_{k}(s_t, t - t_k)$ at times $t$ during the $k$th episode.

We define the regret incurred by a reinforcement learning algorithm $\pi$ up to time $T$ to be
$$\text{Regret}(T, \pi) := \sum_{k=1}^{\lceil T/\tau \rceil} \Delta_k,$$
where $\Delta_k$ denotes regret over the $k$th episode, defined with respect to the MDP $M^*$ by
$$\Delta_k = \sum_{s \in \states} \rho(s) (V^{M^*}_{\mu^*, 1}(s) - V^{M^{*}}_{\mu_k, 1}(s)),$$
with $\mu^* = \mu^{M^*}$ and $\mu_{k}\sim \pi_{k}(H_{t_k})$. Note that regret is not deterministic since it can depend on the random MDP $M^*$, the algorithm's internal random sampling and, through the history $H_{t_k}$, on previous random transitions and random rewards. We will assess and compare algorithm performance in terms of regret and its expectation. 




\section{Posterior sampling for reinforcement learning}

The use of posterior sampling for reinforcement learning (PSRL) was first proposed by Strens \cite{strens2000bayesian}.
PSRL begins with a prior distribution over MDPs with states $\states$, actions $\actions$ and horizon $\tau$.  At the start of each $k$th episode, PSRL samples an MDP $M_k$ from the posterior distribution conditioned on the history $H_{t_k}$ available at that time.  PSRL then computes  and follows the policy $\mu_k = \mu^{M_k}$ over episode $k$. 


\hrulefill
\begin{algorithm}[H]
\noindent \textbf{Algorithm: Posterior Sampling for Reinforcement Learning (PSRL)}\\
\hrulefill

 \SetAlgoLined
\KwData{Prior distribution $f$,   t=1}
\For{\text{episodes} $k=1,2,\dots$}{
		sample $M_k \sim f(\cdot | H_{t_k})$ \\
		compute $\mu_k = \mu^{M_k}$ \\
		\For{\text{timesteps} $j = 1,\ldots,\tau$}{
			sample and apply $a_t = \mu_k(s_t,j)$\\
			observe $r_t$ and $s_{t+1}$ \\
			$t = t+1$
		}
		}
\end{algorithm}
\hrule

We show PSRL obeys performance guarantees intimately related to those for learning algorithms based upon OFU, as has been demonstrated for multi-armed bandit problems \cite{russo2013}. We believe that a posterior sampling approach offers some inherent advantages.
Optimistic algorithms require explicit construction of the confidence bounds on $V^{M^*}_{\mu, 1}(s)$ based on observed data, which is a complicated statistical problem even for simple models. In addition, even if strong confidence bounds for $V^{M^*}_{\mu, 1}(s)$ were known, solving for the best optimistic policy may be computationally intractable. Algorithms such as UCRL2 \cite{jaksch2010near} are computationally tractable, but must resort to separately bounding $\overline{R}^M_a(s)$ and $P^M_a(s)$ with high probability for each $s,a$. These bounds allow a ``worst-case" mis-estimation simultaneously in {\it every} state-action pair and consequently give rise to a confidence set which may be far too conservative.

By contrast, PSRL always selects policies according to the probability they are optimal. Uncertainty about each policy is quantified in a statistically efficient way through the posterior distribution. The algorithm only requires a single sample from the posterior, which may be approximated through algorithms such as Metropolis-Hastings if no closed form exists. As such, we believe PSRL will be simpler to implement, computationally cheaper and statistically more efficient than existing optimistic methods.

\subsection{Main results}

The following result establishes regret bounds for PSRL.
The bounds have $\tilde{O}(\tau S\sqrt{AT})$ expected regret, and, to our knowledge, provide the first guarantees for an algorithm not based upon optimism:

\begin{theorem}
\label{th:regret}
If $f$ is the distribution of $M^*$ then, 
\begin{equation} \label{eq: high probability regret bound}
\Exp \left[\mathrm{Regret}(T, \pi^{\rm PS}_{\tau}) \right]= O\left(\tau S\sqrt{AT\log(SAT)}\right)
\end{equation}
\end{theorem}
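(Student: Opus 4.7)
My plan is to follow the Bayesian regret template from Russo and Van Roy for Thompson sampling in bandits, combined with a UCRL2-style confidence-set and simulation-lemma decomposition over the finite horizon.

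The first step is the posterior sampling identity: conditional on $H_{t_k}$, the sampled MDP $M_k$ and the true MDP $M^*$ are identically distributed, so for any $H_{t_k}$-measurable confidence set $\mathcal{M}_k$ and any function of $(M, \mu^M)$, the conditional expectations under $M_k$ and under $M^*$ coincide. Applying this to the optimal values gives
\begin{equation*}
\Exp\left[\sum_{s} \rho(s) V^{M^*}_{\mu^*,1}(s) \,\middle|\, H_{t_k}\right] = \Exp\left[\sum_{s} \rho(s) V^{M_k}_{\mu_k,1}(s) \,\middle|\, H_{t_k}\right],
\end{equation*}
so $\Exp[\Delta_k] = \Exp\bigl[\sum_s \rho(s)\bigl(V^{M_k}_{\mu_k,1}(s) - V^{M^*}_{\mu_k,1}(s)\bigr)\bigr]$. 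This reduces bounding regret to bounding, in expectation, the on-policy value gap under the \emph{same} policy $\mu_k$ evaluated in the sampled versus true MDP.

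Next I would build, for each episode $k$, an $H_{t_k}$-measurable confidence set $\mathcal{M}_k$ containing all MDPs whose empirical rewards and transitions (under visit counts $n_k(s,a)$) differ from the estimates by at most the standard $L^1$ radius $O(\sqrt{S \log(SAT)/n_k(s,a)})$ for transitions and $O(\sqrt{\log(SAT)/n_k(s,a)})$ for rewards, so that $\Prob(M^* \notin \mathcal{M}_k) \leq 1/T$ by Weissman-type concentration plus a union bound. The posterior sampling identity then immediately yields $\Prob(M_k \notin \mathcal{M}_k) \leq 1/T$ as well, so the contribution of episodes where either MDP leaves $\mathcal{M}_k$ is at most a constant in expectation. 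On the good event I decompose the value gap by a standard telescoping/simulation argument based on the Bellman equation: writing $V^{M_k}_{\mu_k,i}(s) - V^{M^*}_{\mu_k,i}(s)$ as the one-step reward difference plus $(P^{M_k} - P^{M^*}) V^{M_k}_{\mu_k,i+1}$ plus $P^{M^*}(V^{M_k}_{\mu_k,i+1} - V^{M^*}_{\mu_k,i+1})$, and unrolling, yields an expression of the form
\begin{equation*}
\sum_{t=t_k}^{t_k+\tau-1} \Bigl[\Rdiff + \Pdiff \cdot V^{M_k}_{\mu_k,\,t-t_k+1}\Bigr]
\end{equation*}
evaluated on the trajectory of $\mu_k$ in $M^*$, up to a martingale difference term that vanishes in expectation.

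On $\mathcal{M}_k$, both factors in each summand are controlled: the reward term by $O(\sqrt{\log(SAT)/n_k(s_t,a_t)})$, and the transition term by $\|V^{M_k}_{\mu_k,\cdot}\|_\infty \cdot \|\Pdiff\|_1 \leq \tau \cdot O(\sqrt{S\log(SAT)/n_k(s_t,a_t)})$. Summing over the $\tau$ steps in an episode and then over episodes, and invoking the pigeonhole bound $\sum_{t=1}^{T} 1/\sqrt{n_{k(t)}(s_t,a_t)} = \tilde{O}(\sqrt{SAT})$, produces a total of $\tilde{O}(\tau S \sqrt{AT})$, matching the claim. The main obstacle, in my view, is making the posterior sampling identity interact cleanly with the confidence sets: one must verify that $\mathcal{M}_k$ is $\sigma(H_{t_k})$-measurable so that the equality-in-distribution argument carries $M_k$ into $\mathcal{M}_k$ with the same probability as $M^*$, and one must carefully account for the martingale remainder produced by replacing the realized trajectory's $P^{M^*}$-expectations with trajectory visit counts. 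Everything else is a routine concentration and pigeonhole calculation in the style of Jaksch et al.\ \cite{jaksch2010near}.
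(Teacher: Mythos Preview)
Your proposal is correct and matches the paper's proof essentially step for step: the posterior-sampling identity to replace $\Delta_k$ by $\tilde{\Delta}_k$, the Bellman telescoping that isolates one-step reward and transition errors plus a mean-zero martingale term, the UCRL2-style $L^1$ confidence sets $\mathcal{M}_k$ whose $\sigma(H_{t_k})$-measurability lets you transfer $\Prob(M^*\notin\mathcal{M}_k)$ to $\Prob(M_k\notin\mathcal{M}_k)$, and the pigeonhole sum $\sum_t 1/\sqrt{n_{k(t)}(s_t,a_t)}=O(\sqrt{SAT})$. The only point to be slightly more careful about than your sketch indicates is that $n_{k(t)}(s_t,a_t)$ is frozen at the start of the episode, so the pigeonhole step needs the split into $n_{k}\le\tau$ (contributing $O(\tau SA)$) and $n_{k}>\tau$ (where $n_k\ge (n_t+1)/2$), exactly as the paper does in its appendix.
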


This result holds for \emph{any} prior distribution on MDPs, and so applies to an immense class of models. To accommodate this generality, the result bounds expected regret under the prior distribution (sometimes called {\it Bayes risk} or {\it Bayesian regret}). We feel this is a natural measure of performance, but should emphasize that it is more common in the literature to bound regret under a worst-case MDP instance. The next result provides a link between these notions of regret. Applying Markov's inequality to \eqref{eq: high probability regret bound} gives convergence in probability.


\begin{corollary}
If $f$ is the distribution of $M^*$ then for any $\alpha>\frac{1}{2}$, 
$$
\frac{\mathrm{Regret}(T, \pi^{\rm PS}_{\tau})}{T^{\alpha}} \underset{p}{\rightarrow} 0.
 $$
\end{corollary}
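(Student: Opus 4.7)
The plan is to apply Markov's inequality directly to the expected-regret bound from Theorem~\ref{th:regret}, which is essentially the one-line argument hinted at immediately before the corollary statement.

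First I would observe that $\mathrm{Regret}(T, \pi^{\rm PS}_{\tau})$ is almost surely nonnegative: since $\mu^* = \mu^{M^*}$ is, by construction, optimal for $M^*$, each episodic regret
$\Delta_k = \sum_{s \in \states} \rho(s)\bigl(V^{M^*}_{\mu^*,1}(s) - V^{M^*}_{\mu_k,1}(s)\bigr)$
is a.s.\ nonnegative, and the total regret is a finite sum of such terms. Hence Markov's inequality can be applied to $\mathrm{Regret}(T,\pi^{\rm PS}_{\tau})/T^{\alpha}$ as an unsigned random variable.

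Next, fix any $\alpha > \tfrac{1}{2}$ and $\epsilon > 0$. Applying Markov's inequality and then substituting the bound from Theorem~\ref{th:regret},
\begin{equation*}
\Prob\!\left(\frac{\mathrm{Regret}(T, \pi^{\rm PS}_{\tau})}{T^{\alpha}} > \epsilon\right) \;\leq\; \frac{\Exp\bigl[\mathrm{Regret}(T, \pi^{\rm PS}_{\tau})\bigr]}{\epsilon\, T^{\alpha}} \;=\; O\!\left(\frac{\tau S \sqrt{A \log(SAT)}}{\epsilon\, T^{\alpha - 1/2}}\right).
\end{equation*}
Since $\alpha - \tfrac{1}{2} > 0$ and $\sqrt{\log(SAT)}$ grows only polylogarithmically in $T$, the right-hand side tends to $0$ as $T \to \infty$. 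Because $\epsilon > 0$ was arbitrary, this is exactly the convergence statement $\mathrm{Regret}(T,\pi^{\rm PS}_{\tau})/T^{\alpha} \underset{p}{\rightarrow} 0$.

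There is no real obstacle here: Theorem~\ref{th:regret} does all the substantive work, and the remainder is a textbook Markov-inequality deduction. The only subtlety worth flagging is the nonnegativity of regret, which is required in order for Markov's inequality to apply to the unsigned random variable and which follows immediately from the definition of $\mu^*$ as an optimal policy for $M^*$.
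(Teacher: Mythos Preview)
Your proposal is correct and matches the paper's own approach: the corollary is obtained precisely by applying Markov's inequality to the expected-regret bound of Theorem~\ref{th:regret}, and your observation that $\mathrm{Regret}(T,\pi^{\rm PS}_{\tau})$ is a.s.\ nonnegative (needed for Markov) is the only point the paper leaves implicit.
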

As shown in the appendix, this also bounds the frequentist regret for any MDP with non-zero probability.
State-of-the-art guarantees similar to Theorem 1 are satisfied by the algorithms UCRL2 \cite{jaksch2010near} and REGAL \cite{bartlett2009regal} for the case of non-episodic RL.
Here UCRL2 gives regret bounds $\tilde{O}(D S\sqrt{AT})$ where $D = \max_{s' \neq s} \min_\pi \Exp[T(s'|M,\pi,s)]$
and $T(s'|M,\pi,s)$ is the first time step where $s'$ is reached from $s$ under the policy $\pi$.
REGAL improves this result to $\tilde{O}(\Psi S\sqrt{AT})$ where $\Psi \le D$ is the span of the of the optimal value function.
However, there is so far no computationally tractable implementation of this algorithm.

In many practical applications we may be interested in episodic learning tasks where the constants $D$ and $\Psi$ could be improved to take advantage of the episode length $\tau$.
Simple modifications to both UCRL2 and REGAL will produce regret bounds of $\tilde{O}(\tau S\sqrt{AT})$, just as PSRL.
This is close to the theoretical lower bounds of $\sqrt{SAT}$-dependence.

\section{True versus sampled MDP}

A simple observation, which is central to our analysis, is that, at the start of each $k$th episode, $\opt$ and $\sampled$ are identically distributed.
This fact allows us to relate quantities that depend on the true, but unknown, MDP $\opt$, to those of the sampled MDP $\sampled$, which is fully observed by the agent. We introduce $\history$ as the $\sigma$-algebra generated by the history up to $t_k$. Readers unfamiliar with measure theory can think of this as ``all information known just before the start of period $t_k$." When we say that a random variable X is $\history$-measurable, this intuitively means that although X is random, it is deterministically known given the information contained in $H_{t_k}$. The following lemma is an immediate consequence of this observation \cite{russo2013}.

\begin{lemma}[Posterior Sampling]
\label{lem: fundemental}
If $f$ is the distribution of $M^*$ then, for any $\history$-measurable function $g$,
\begin{equation}\label{eq: fundemental lemma}
\Exp [g(\opt) | H_{t_k}] = \Exp[g(\sampled) | H_{t_k}].
\end{equation}
\end{lemma}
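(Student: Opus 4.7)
The plan is to unfold both sides of the claimed identity directly from the definition of PSRL and the definition of the posterior distribution, so the proof is essentially a one-line observation dressed up with the tower property. The key fact, which is hard-coded into the algorithm, is that at the beginning of episode $k$ PSRL draws $M_k \sim f(\cdot \mid H_{t_k})$, and by definition $f(\cdot \mid H_{t_k})$ is a regular conditional distribution of $M^*$ given $\sigma(H_{t_k})$ (since $f$ is the prior of $M^*$). Therefore, conditional on $\sigma(H_{t_k})$, the random MDPs $M^*$ and $M_k$ share the same law.

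Concretely, I would first fix an outcome in $\sigma(H_{t_k})$, so the $\sigma(H_{t_k})$-measurable function $g$ becomes a deterministic measurable map on the space of MDPs. Then I would write
\begin{equation*}
\Exp[g(M^*) \mid H_{t_k}] \;=\; \int g(M)\, f(dM \mid H_{t_k}) \;=\; \Exp[g(M_k) \mid H_{t_k}],
\end{equation*}
where the first equality is the definition of conditional expectation with respect to the posterior $f(\cdot \mid H_{t_k})$, and the second equality follows because $M_k$, conditional on $H_{t_k}$, is by construction distributed according to exactly this posterior. The tower property then handles any measurability subtleties if $g$ itself depends on $H_{t_k}$: on the event that $H_{t_k}$ takes a particular value, $g$ reduces to a fixed function of the MDP, and both sides agree pointwise.

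There is no real obstacle; the only delicate point is handling $g$ as a random (but $\sigma(H_{t_k})$-measurable) function rather than a fixed function of $M$. This is dealt with by standard ``take $g$ pathwise for each realization of $H_{t_k}$'' reasoning, or equivalently by approximating $g$ by simple functions of the form $\mathds{1}_A(H_{t_k}) h(M)$ and appealing to linearity and monotone convergence. In either case the conclusion is immediate from the construction of PSRL, which is why Lemma~\ref{lem: fundemental} is best viewed as a definitional observation rather than a substantive computation.
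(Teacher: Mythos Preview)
Your proposal is correct and matches the paper's approach: the paper does not even give a proof, stating only that the lemma is an immediate consequence of the observation that $M^*$ and $M_k$ are identically distributed given $H_{t_k}$ (citing \cite{russo2013}). Your argument is precisely the one-line unpacking of that observation, with the extra care about $\sigma(H_{t_k})$-measurable $g$ being a welcome clarification rather than a deviation.
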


Note that taking the expectation of \eqref{eq: fundemental lemma} shows $\Exp [g(\opt)] = \Exp[g(\sampled)]$ through the tower property. 

Recall, we have defined $\Delta_k = \sum_{s \in \states} \rho(s) (V^{M^*}_{\mu^*, 1}(s) - V^{M^{*}}_{\mu_k, 1}(s))$  to be the regret over period $k$. A significant hurdle in analyzing this equation is its dependence on the optimal policy $\optPol$, which we do not observe. For many reinforcement learning algorithms, there is no clean way to relate the unknown optimal policy to the states and actions the agent actually observes. The following result shows how we can avoid this issue using Lemma \ref{lem: fundemental}. First, define 
\begin{equation} \label{eq: deltaTilde}
\tilde{\Delta}_k = \sum_{s \in \states} \rho(s) (V^{M_k}_{\mu_k, 1}(s) - V^{M^{*}}_{\mu_k, 1}(s))
\end{equation}

as the difference in expected value of the policy $\mu_k$ under the sampled MDP $M_k$, which is known, and its performance under the true MDP $M^*$, which is observed by the agent.

\begin{theorem}[Regret equivalence]
\begin{equation}\label{eq: from Delta to Delta tilde}
\Exp \left[ \sum_{k=1}^{m} \Delta_k \right] =\Exp \left[ \sum_{k=1}^{m} \tilde{\Delta}_k \right] 
\end{equation}
and for any $\delta>0$ with probability at least $1-\delta$, 
\end{theorem}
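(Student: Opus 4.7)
The plan is to show that the per-episode discrepancy $\Delta_k - \tilde{\Delta}_k$ has conditional mean zero given $\history$, which immediately gives the equality in expectation and, as a bonus, turns the cumulative difference into a bounded martingale to which Azuma--Hoeffding applies for the (truncated) high-probability half.

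First, subtracting the definitions of $\Delta_k$ and $\tilde{\Delta}_k = \sum_{s}\rho(s)(V^{M_k}_{\mu_k,1}(s) - V^{M^*}_{\mu_k,1}(s))$ produces a cancellation of the common $\sum_s \rho(s) V^{M^*}_{\mu_k,1}(s)$ term, leaving
\begin{equation*}
\Delta_k - \tilde{\Delta}_k \;=\; \sum_{s\in\states}\rho(s)\bigl(V^{M^*}_{\mu^*,1}(s) \;-\; V^{M_k}_{\mu_k,1}(s)\bigr).
\end{equation*}
Now define $g(M) := \sum_{s\in\states}\rho(s) V^{M}_{\mu^{M},1}(s)$, i.e.\ the prior-weighted initial value of the optimal policy under $M$. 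This is a deterministic function of the MDP alone, hence trivially $\history$-measurable in the sense required by Lemma~\ref{lem: fundemental}. Applying that lemma yields $\Exp[g(\opt)\mid H_{t_k}] = \Exp[g(\sampled)\mid H_{t_k}]$, which is exactly $\Exp[\Delta_k - \tilde{\Delta}_k \mid H_{t_k}] = 0$. Summing over $k$ and taking total expectations via the tower property establishes \eqref{eq: from Delta to Delta tilde}.

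For the high-probability half (whose statement is clipped in the excerpt but is presumably a deviation bound of order $\tau\sqrt{m\log(1/\delta)}$), the previous step already shows that $Z_m := \sum_{k=1}^{m}(\Delta_k - \tilde{\Delta}_k)$ is a zero-mean martingale with respect to the filtration $\{\history\}_{k\ge 1}$. Since rewards lie in $[0,1]$, each value function $V^{M}_{\mu,1}$ takes values in $[0,\tau]$, so each increment is bounded in absolute value by $2\tau$. A standard Azuma--Hoeffding inequality then gives $|Z_m| \le 2\tau\sqrt{2m\log(2/\delta)}$ with probability at least $1-\delta$, which rearranges into a statement comparing $\sum_k \Delta_k$ and $\sum_k \tilde{\Delta}_k$.

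The main obstacle is really just conceptual rather than technical: one must recognize that the unobserved quantity $V^{M^*}_{\mu^*,1}(s)$ and the observable quantity $V^{M_k}_{\mu_k,1}(s)$ are both instances of the same deterministic functional $g$ evaluated at posterior-identical random MDPs. Once the right choice of $g$ is made, Lemma~\ref{lem: fundemental} discharges all of the content; the only care needed is to verify that $g$ depends on $M$ only and not on history, so that no hidden measurability issue arises when we invoke the lemma.
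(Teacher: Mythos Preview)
Your proof is correct and follows the same approach as the paper's: compute $\Delta_k - \tilde{\Delta}_k = g(M^*) - g(M_k)$ for the functional $g(M)=\sum_s \rho(s)V^M_{\mu^M,1}(s)$ and invoke Lemma~\ref{lem: fundemental} to get zero conditional mean, then take expectations. The paper records the slightly tighter range $\Delta_k-\tilde{\Delta}_k\in[-\tau,\tau]$ (your $2\tau$ bound is loose but harmless) and does not actually prove the truncated high-probability clause; your Azuma--Hoeffding completion of that part is the natural argument.
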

\begin{proof}
Note,  $\Delta_k - \tilde{\Delta}_k =\sum_{s \in \states} \rho(s) (V^{M^*}_{\mu^*, 1}(s) - V^{M_k}_{\mu_k, 1}(s)) \in [-\tau, \tau]$. By Lemma \ref{lem: fundemental}, $\Exp[\Delta_k - \tilde{\Delta}_k \vert H_{t_k}] =0 $. Taking expectations of these sums therefore establishes the claim.
\end{proof}

This result bounds the agent's regret in epsiode $k$ by the difference between the agent's estimate $V^{M_k}_{\mu_k, 1}(s_{t_k})$ of the expected reward in $M_k$ from the policy it chooses, and the expected reward $V^{M^*}_{\mu_k, 1}(s_{t_k})$ in $M^*$. If the agent has a poor estimate of the MDP $M^*$, we expect it to learn as the performance of following $\mu_k$ under $M^*$ differs from its expectation under $M_k$. As more information is gathered, its performance should improve. In the next section, we formalize these ideas and give a precise bound on the regret of posterior sampling.

\section{Analysis}

An essential tool in our analysis will be the dynamic programming, or Bellman operator $\mathcal{T}_{\mu}^{M}$, which for any MDP $M = (\states, \actions, R^M, P^M, \tau, \rho)$, stationary policy $\mu:\states \rightarrow \actions$ and value function $V:\states \rightarrow \mathds{R}$, is defined by

$$\mathcal{T}_{\mu}^{M} V(s) := \overline{R}_{\mu}^{M} (s, \mu) + \sum_{s' \in \states} P_{\mu(s)}^{M}(s' | s) V(s').$$

This operation returns the expected value of state $s$ where we follow the policy $\mu$ under the laws of $M$, for one time step. The following lemma gives a concise form for the dynamic programming paradigm in terms of the Bellman operator.
\begin{lemma}[Dynamic programming equation]
\label{lem: DPl}
For any MDP $M = (\states, \actions, R^M, P^M, \tau, \rho)$ and policy $\mu:\states \times \{1,\ldots,\tau\} \rightarrow \actions$, the value functions $V^M_\mu$ satisfy
\begin{equation}\label{eq: DP}
V_{\mu,i}^{M} = \mathcal{T}_{\mu(\cdot,i)}^M V_{\mu,i+1}^M
\end{equation}
for $i=1\dots \tau$, with $V_{\mu,\tau+1}^{M} := 0$. 
\end{lemma}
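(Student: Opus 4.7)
The plan is to prove the dynamic programming recursion directly from the definition of $V^M_{\mu,i}$ by peeling off the first summand and conditioning on the one-step transition, then invoking the Markov property of the MDP dynamics. The boundary convention $V^M_{\mu,\tau+1} := 0$ makes the identity for $i = \tau$ reduce to $V^M_{\mu,\tau}(s) = \overline{R}^M_{\mu(s,\tau)}(s)$, which is immediate from the definition, so the substance is in the step $i < \tau$.

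First I would write
\begin{equation*}
V^M_{\mu,i}(s) \;=\; \mathbb{E}_{M,\mu}\!\left[\sum_{j=i}^{\tau} \overline{R}^M_{a_j}(s_j) \;\Big|\; s_i = s\right]
\;=\; \overline{R}^M_{\mu(s,i)}(s) \;+\; \mathbb{E}_{M,\mu}\!\left[\sum_{j=i+1}^{\tau} \overline{R}^M_{a_j}(s_j) \;\Big|\; s_i = s\right],
\end{equation*}
using the fact that $a_i = \mu(s,i)$ is deterministic given $s_i = s$. The first term is exactly the instantaneous reward appearing in $\mathcal{T}^M_{\mu(\cdot,i)}$, so the remaining task is to identify the residual expectation with $\sum_{s'} P^M_{\mu(s,i)}(s' \mid s)\, V^M_{\mu,i+1}(s')$.

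For the residual term I would condition on $s_{i+1}$ using the tower property and then apply the Markov property: given $s_{i+1} = s'$, the future states $s_{i+1}, s_{i+2}, \ldots, s_\tau$ and the actions $a_{i+1}, \ldots, a_\tau$ are distributed exactly as when the policy $\mu$ is run from initial state $s'$ at stage $i+1$. Consequently
\begin{equation*}
\mathbb{E}_{M,\mu}\!\left[\sum_{j=i+1}^{\tau} \overline{R}^M_{a_j}(s_j) \;\Big|\; s_{i+1} = s'\right] \;=\; V^M_{\mu,i+1}(s'),
\end{equation*}
and since $s_{i+1} \sim P^M_{\mu(s,i)}(\cdot \mid s)$, averaging over $s'$ gives the sum appearing in the Bellman operator. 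Combining with the first term yields $V^M_{\mu,i}(s) = \mathcal{T}^M_{\mu(\cdot,i)} V^M_{\mu,i+1}(s)$ for every $s \in \mathcal{S}$, which is \eqref{eq: DP}.

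The only non-trivial piece is the Markov step; since the transition kernels $P^M$ and reward distributions $R^M$ are time-homogeneous and the non-stationary policy $\mu(\cdot, j)$ depends on time only through the stage index $j$, the conditional distribution of the trajectory from stage $i+1$ onward given $s_{i+1} = s'$ really does match the law used to define $V^M_{\mu,i+1}(s')$. I do not expect any serious obstacle beyond being careful that the time-indexing on $\mu$ is consistent; the case $i = \tau$ is handled by the convention $V^M_{\mu,\tau+1} \equiv 0$, which makes the sum in $\mathcal{T}^M_{\mu(\cdot,\tau)} V^M_{\mu,\tau+1}$ vanish and recovers $V^M_{\mu,\tau}(s) = \overline{R}^M_{\mu(s,\tau)}(s)$.
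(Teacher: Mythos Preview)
Your proof is correct and is the standard one-step conditioning argument for the finite-horizon Bellman recursion. The paper itself does not supply a proof of this lemma; it is stated as a standard fact and used without justification, so there is nothing to compare against beyond noting that your argument fills in exactly the expected details.
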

In order to streamline our notation we will let $V_{\mu,i}^* := V_{\mu,i}^{M^*}$ , $ V_{\mu,i}^k (s) := V_{\mu,i}^{M_k} (s)$, $\mathcal{T}_{\mu}^{k} := \mathcal{T}_{\mu}^{M_k} $, $\mathcal{T}_{\mu}^{*} := \mathcal{T}_{\mu}^{M^*} $ and  $P^*_\mu(\cdot | s) := P_{\mu(s)}^{M^*}(\cdot | s)$.

\subsection{Rewriting regret in terms of Bellman error}
\label{subsec: regret in terms of bellman error}
\begin{equation} \label{eq: regret in terms of bellman error}
\Exp \left[ \tilde{\Delta}_k \big\vert M^*, M_k \right]= \Exp \left[ \sum_{i=1}^{\tau} \left[(\bellmanSampled-\bellmanTrue)V^k_{\sampledPol,i+1}(s_{t_k+i})   \right] 
\bigg\vert M^*, M_k \right]
\end{equation}

To see why \eqref{eq: regret in terms of bellman error} holds, simply apply the Dynamic programming equation inductively: 
\begin{eqnarray*}
(V_{\sampledPol,1}^{k}-V_{\sampledPol,1}^{*})(s_{t_k+1}) &=& (\bellmanSampledA V_{\sampledPol,2}^{k} - \bellmanTrueA V_{\sampledPol,2}^{*})(s_{t_k+1})\\
&=& (\bellmanSampledA-\bellmanTrueA) V_{\sampledPol,2}^{k}(s_{t_k+1})\\
&&+ \sum_{s' \in \states} \{P_{\mu_k(\cdot, 1)}^{*}(s' \vert s_{t_k+1})(V_{\sampledPol,2}^{*}-V_{\sampledPol,2}^{k})(s') \}\\
&=& (\bellmanSampledA-\bellmanTrueA) V_{\sampledPol,2}^{k}(s_{t_k+1})+(V_{\sampledPol,2}^{*}-V_{\sampledPol,2}^{k})(s_{t_k+1})+d_{t_k+1}\\
&=& \dots \\
&=& \sum_{i=1}^{\tau} (\bellmanSampled-\bellmanTrue) V_{\sampledPol, i+1}^{k}(s_{t_k+i})+\sum_{i=1}^{\tau} d_{t_k+i},
\end{eqnarray*}
where 
$d_{t_k+i} := \sum_{s' \in \states} \{P_{\mu_{k}(\cdot,i)}^{*}(s' \vert s_{t_k+i })(V_{\sampledPol,i+1}^{*}-V_{\sampledPol,i+1}^{k})(s') \} -(V_{\sampledPol,i+1}^{*}-V_{\sampledPol,i+1}^{k})(s_{t_k+i})$. 

This expresses the regret in terms two factors. The first factor is the one step \emph{Bellman error} $\left[(\bellmanSampled-\bellmanTrue) V_{\sampledPol, i+1}^{k}(s_{t_k+i})   \right]$ under the sampled MDP $M_k$. Crucially, \eqref{eq: regret in terms of bellman error} depends only the Bellman error under the observed policy $\mu_k$ and the states $s_1,..,s_T$ that are actually visited over the first $T$ periods.  We go on to show the posterior distribution of $M_k$ concentrates around $M^*$  as these actions are sampled, and so this term tends to zero. 

The second term captures the randomness in the transitions of the true MDP $M^*$. In state $s_t$ under policy $\mu_k$, the expected value of 
$(V_{\sampledPol,i+1}^{*}-V_{\sampledPol,i+1}^{k})(s_{t_k+i})$
is exactly
$\sum_{s' \in \states} \{P_{\mu_{k}(\cdot,i)}^{*}(s' \vert s_{t_k+i})(V_{\sampledPol,i+1}^{*}-V_{\sampledPol,i+1}^{k})(s') \}$.
Hence, conditioned on the true MDP $M^*$ and the sampled MDP $M_k$, the term $\sum_{i=1}^{\tau} d_{t_k+i}$ has expectation zero. 

\subsection{Introducing confidence sets}
The last section reduced the algorithm's regret to its expected Bellman error. We will proceed by arguing that the sampled Bellman operator $\bellmanSampled$ concentrates around the true Bellman operatior $\bellmanTrue$. To do this, we introduce high probability confidence sets similar to those used in \cite{jaksch2010near} and \cite{bartlett2009regal}. Let $\hat{P}_a^{t}(\cdot | s)$ denote the emprical distribution up period $t$ of transitions observed after sampling $(s,a)$, and let $\hat{R}_{a}^{t}(s)$ denote the empirical average reward. Finally, define $N_{t_k}(s,a)= \sum_{t=1}^{t_k-1}\Ind_{\{(s_t,a_t)=(s,a)\}}$ to be the number of times $(s,a)$ was sampled prior to time $t_k$. Define the confidence set for episode $k$:

$$\mathcal{M}_{k} := \left\{M: \left\Vert \hat{P}_{a}^{t}(\cdot | s) - P_a^{M}(\cdot | s)\right\Vert_1 \leq \beta_k(s,a) \,\, 
\& \,\,\, |\hat{R}_{a}^{t}(s) - R_a^{M}(s)|\leq \beta_k(s,a)
\,\,\,\, \forall (s,a) \right\}$$ 

Where $\beta_k(s,a) := \sqrt{\frac{14 S \log(2SAm t_k)}{\max \{1,N_{t_k}(s,a) \}} }$ is chosen conservatively so that $\mathcal{M}_{k}$ contains both $M^*$ and $M_k$ with high probability. 
It's worth pointing out that we have not tried to optimize this confidence bound, and it can be improved, at least by a numerical factor, with more careful analysis. Now, using that $\tilde{\Delta}_k\leq \tau$ 
we can decompose regret as follows:

\begin{equation}\label{eq: confidence set decomposition}
\sum_{k=1}^{m} \tilde{\Delta}_k \leq \sum_{k=1}^{m} \tilde{\Delta}_k \Ind_{\{M_k, M^* \in \mathcal{M}_k\}} +\tau \sum_{k=1}^{m} 
[\Ind_{\{M_k \notin \mathcal{M}_k\}}+\Ind_{\{M^* \notin \mathcal{M}_k\}}]
\end{equation}

Now, since $\mathcal{M}_{k}$ is $\history$-measureable, by Lemma \ref{lem: fundemental}, 
$\Exp [\Ind_{\{M_k \notin \mathcal{M}_k\}} \vert H_{t_k}] =  \Exp [\Ind_{\{M^* \notin \mathcal{M}_k\}}\vert H_{t_k}]$.
Lemma 17 of \cite{jaksch2010near} shows\footnote{Our confidence sets are equivalent to those of \cite{jaksch2010near} when the parameter $\delta = 1/m$.} 
$\mathds{P}(M^* \notin \mathcal{M}_k) \leq 1/m$
for this choice of $\beta_k(s,a)$, which implies

\begin{eqnarray}\label{eq: expected confidence}
 	\Exp \left[ \sum_{k=1}^{m} \tilde{\Delta}_k \right] &\leq& \Exp \left[ \sum_{k=1}^{m} \tilde{\Delta}_k \Ind_{\{M_k, M^* \in \mathcal{M}_k\}} \right]
 	+ 2 \tau \sum_{k=1}^{m} \Prob{\{M^* \notin \mathcal{M}_k\}}. \nonumber \\ 
	&\leq& \Exp \left[ \sum_{k=1}^{m} \Exp\left[\tilde{\Delta}_k \vert M^*, M_k \right]\Ind_{\{M_k, M^* \in \mathcal{M}_k\}} \right]
	+2\tau \nonumber \\
 	&\leq& \Exp \sum_{k=1}^m \sum_{i=1}^{\tau} | (\bellmanSampled-\bellmanTrue) V_{\sampledPol, i+1}^{k}(s_{t_k+i}) | 
	\Ind_{\{M_k, M^* \in \mathcal{M}_k\}} +2\tau \nonumber \\
	&\le& \tau \Exp \sum_{k=1}^m \sum_{i=1}^{\tau} \min \{ \beta_k(s_{t_k+i},a_{t_k+i}), 1\} +2\tau.
\end{eqnarray}
We also have the worst--case bound  $\sum_{k=1}^{m} \tilde{\Delta}_k  \leq T$. In the technical appendix we go on to provide a worst case bound on $\min\{ \tau \sum_{k=1}^m \sum_{i=1}^{\tau} \min \{ \beta_k(s_{t_k+i},a_{t_k+i}), 1\} , T \}$  of order $\tau S \sqrt{AT \log(SAT)}$, which completes our analysis.

\section{Simulation results}

We compare performance of PSRL to UCRL2 \cite{jaksch2010near}: an optimistic algorithm with similar regret bounds. We use the standard example of \emph{RiverSwim} \cite{strehl2008analysis}, as well as several randomly generated MDPs. We provide results in both the episodic case, where the state is reset every $\tau=20$ steps, as well as the setting without episodic reset.

\begin{figure}[h!]
  \centering
    \includegraphics[width=0.9\textwidth]{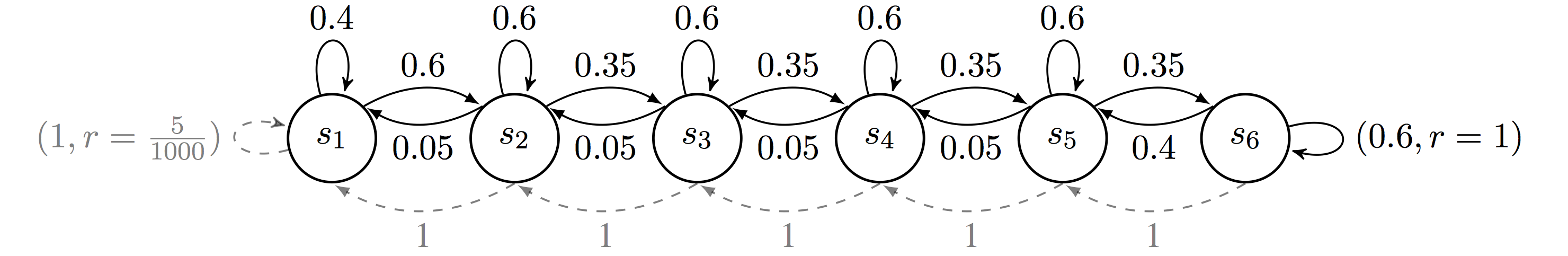}      
      \caption{\emph{RiverSwim} - continuous and dotted arrows represent the MDP under the actions ``right" and ``left".}
      \label{figure: RiverSwim}
\end{figure}

\emph{RiverSwim} consists of six states arranged in a chain as shown in Figure \ref{figure: RiverSwim}. The agent begins at the far left state and at every time step has the choice to swim left or right. Swimming left (with the current) is always successful, but swimming right (against the current) often fails. The agent receives a small reward for reaching the leftmost state, but the optimal policy is to attempt to swim right and receive a much larger reward. This MDP is constructed so that efficient exploration is required in order to obtain the optimal policy. To generate the random MDPs, we sampled 10-state, 5-action environments according to the prior.

We express our prior in terms of Dirichlet and normal-gamma distributions over the transitions and rewards respectively.\footnote{These priors are conjugate to the multinomial and normal distribution. We used the values $ \alpha=1/S , \mu=\sigma^2=1$ and pseudocount $n=1$ for a diffuse uniform prior.}
In both environments we perform 20 Monte Carlo simulations and compute the total regret over 10,000 time steps. We implement UCRL2 with $\delta =0.05$ and optimize the algorithm to take account of finite episodes where appropriate. PSRL outperformed UCRL2 across every environment, as shown in Table 1. In Figure \ref{fig:RiverSwim}, we show regret through time across 50 Monte Carlo simulations to 100,000 time--steps in the \emph{RiverSwim} environment: PSRL's outperformance is quite extreme.


\begin{table}[h!]
\begin{center}
\caption{Total regret in simulation. PSRL outperforms UCRL2 over different environments.}
\begin{tabular}{| c | c c | c c | }
\hline
				& \emph{Random MDP}	& \emph{Random MDP}	& \emph{RiverSwim} & \emph{RiverSwim} \\
\emph{Algorithm}		& $\tau$\emph{-episodes}	& $\infty$\emph{-horizo}n 	& $\tau$\emph{-episodes}	& $\infty$\emph{-horizon} 	\\
\hline
	PSRL		& $1.04 \times 10^4$& $7.30 \times 10^3$&$6.88 \times 10^1$	& $1.06 \times 10^2$ \\
	UCRL2		&$5.92 \times 10^4$	& $1.13 \times 10^5$&$1.26 \times 10^3$	& $3.64 \times 10^3$ \\
\hline
\end{tabular}
\end{center}
\label{table: AveRewards}
\end{table}

\subsection{Learning in MDPs without episodic resets}
The majority of practical problems in reinforcement learning can be mapped to repeated episodic interactions for some length $\tau$. Even in cases where there is no actual reset of episodes, one can show that PSRL's regret is bounded against all policies which work over horizon $\tau$ or less \cite{brafman2003r}. Any setting with discount factor $\alpha$ can be learned for $\tau \propto (1-\alpha)^{-1}$.

One appealing feature of UCRL2 \cite{jaksch2010near} and REGAL \cite{bartlett2009regal}  is that they learn this optimal timeframe $\tau$. Instead of computing a new policy after a fixed number of periods, they begin a new episode when the total visits to any state-action pair is doubled. We can apply this same rule for episodes to PSRL in the $\infty$-horizon case, as shown in Figure \ref{fig:RiverSwim}. Using optimism with KL-divergence instead of $L^1$ balls has also shown improved performance over UCRL2 \cite{KLucrl}, but its regret remains orders of magnitude more than PSRL on \emph{RiverSwim}.

\begin{figure}[h!]
\centering
\begin{subfigure}{.5\textwidth}
  \centering
  \includegraphics[width=1\linewidth]{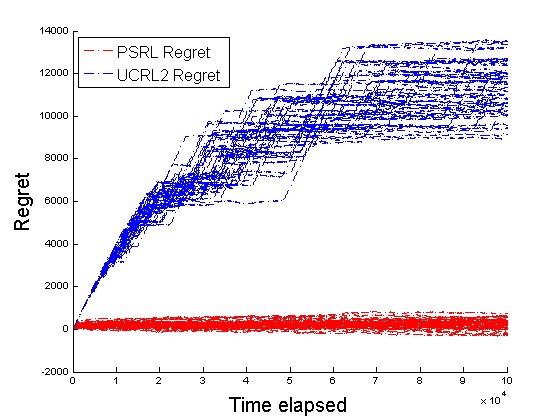}
  \caption{PSRL outperforms UCRL2 by large margins}
  \label{fig:sub1}
\end{subfigure}%
\begin{subfigure}{.5\textwidth}
  \centering
  \includegraphics[width=1\linewidth]{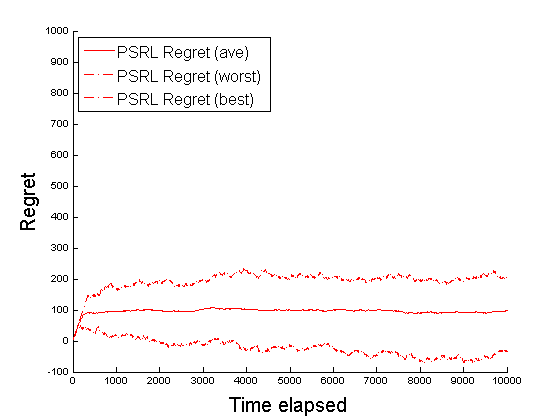}
  \caption{PSRL learns quickly despite misspecified prior}
  \label{fig:sub2}
\end{subfigure}
\caption{Simulated regret on the $\infty$-horizon \emph{RiverSwim} environment.}
\label{fig:RiverSwim}
\end{figure}

\section{Conclusion}

We establish \emph{posterior sampling for reinforcement learning} not just as a heuristic, but as a provably efficient learning algorithm. We present $\tilde{O}(\tau S \sqrt{AT} )$ Bayesian regret bounds, which are some of the first for an algorithm not motivated by optimism and are close to state of the art for any reinforcement learning algorithm. These bounds hold in expectation irrespective of prior or model structure. PSRL is conceptually simple, computationally efficient and can easily incorporate prior knowledge. Compared to feasible optimistic algorithms we believe that PSRL is often more efficient statistically, simpler to implement and computationally cheaper. We demonstrate that PSRL performs well in simulation over several domains. We believe there is a strong case for the wider adoption of algorithms based upon posterior sampling in both theory and practice.


\subsubsection*{Acknowledgments}
Osband and Russo are supported by Stanford Graduate Fellowships courtesy of PACCAR inc., and Burt and Deedee McMurty, respectively. This work was supported in part by Award CMMI-0968707 from
the National Science Foundation.

\newpage

\small{
\bibliography{referenceInformation}
\bibliographystyle{unsrt}
}

\newpage
\appendix
\section{Relating Bayesian to frequentist regret}
Let $\mathcal{M}$ be any family of MDPs with non-zero probability under the prior.
Then, for any $\epsilon > 0$, $\alpha>\frac{1}{2}$:
$$ \Prob \left( \frac{\text{Regret}(T,\pi^{PS}_\tau)}{T^\alpha} > \epsilon \big| M^* \in \mathcal{M} \right) \rightarrow 0 $$
This provides regret bounds even if $M^*$ is not distributed according to $f$.
As long as the true MDP is not impossible under the prior, we will have an asymptotic frequentist regret close to the theoretical lower bounds of in $T$-dependence of $O(\sqrt{T})$.

\begin{proof}
We have for any $\epsilon > 0$:
\begin{eqnarray*}
	\frac{\Exp [ \text{Regret}(T,\pi^{PS}_\tau) ] }{T^\alpha} &\ge&
		\Exp \left[ \frac{ \text{Regret}(T,\pi^{PS}_\tau) }{T^\alpha} \big| M^* \in \mathcal{M} \right] \Prob \left( M^* \in \mathcal{M} \right) \\
	&\ge& \epsilon \Prob \left( \frac{ \text{Regret}(T,\pi^{PS}_\tau) }{T^\alpha} \big| M^* \in \mathcal{M} \right) \Prob \left( M^* \in \mathcal{M} \right)
\end{eqnarray*}
Therefore via theorem (1), for any $\alpha > \frac{1}{2}$:
$$ \Prob \left( \frac{ \text{Regret}(T,\pi^{PS}_\tau) }{T^\alpha} \big| M^* \in \mathcal{M} \right) \le \left( \frac{1}{\epsilon \Prob \left( M^* \in \mathcal{M} \right)} \right) \frac{\Exp [ \text{Regret}(T,\pi^PS_\tau) ] }{T^\alpha} \rightarrow 0 $$
\end{proof}

\section{Bounding the sum of confidence set widths}

We are interested in bounding $\min\{ \tau \sum_{k=1}^m \sum_{i=1}^{\tau} \min \{ \beta_ks_{t_k+i},a_{t_k+i}), 1\} , T \}$ which we claim is $O(\tau S \sqrt{AT \log(SAT)}$ for $\beta_k(s,a) := \sqrt{\frac{14 S \log(2SAm t_k)}{\max \{1,N_{t_k}(s,a) \}} }$. 

\begin{proof}
In a manner similar to \cite{jaksch2010near} we can say:

\begin{eqnarray*}
	\sum_{k=1}^m \sum_{i=1}^\tau \sqrt{\frac{14 S \log(2SAm t_k )}{\max \{1,N_{t_k}(s,a) \}} } 
	&\le& \sum_{k=1}^m \sum_{i=1}^\tau \Ind_{\{N_{t_k} \le \tau\}} + \sum_{k=1}^m \sum_{i=1}^\tau  \Ind_{\{N_{t_k} > \tau\}} 
	\sqrt{\frac{14 S \log(2SAm t_k )}{\max \{1,N_{t_k}(s,a) \}} } \\
\end{eqnarray*}

Now, the consider the event $(s_t, a_t)=(s,a)$ and $(N_{t_k}(s,a) \leq \tau)$. This can happen fewer than $2\tau$ times per state action pair. Therefore, $\sum_{k=1}^m \sum_{i=1}^{\tau}\mathbf{1}(N_{t_k}(s,a) \leq \tau) \leq 2\tau SA$.Now, suppose $N_{t_k}(s,a)>\tau$. Then for any $t\in \{t_k,..,t_{k+1}-1\}$, $N_{t}(s,a)+1 \leq N_{t_k}(s,a)+\tau \leq 2N_{t_k}(s,a)$. Therefore:

\begin{eqnarray*}
 	\sum_{k=1}^{m} \sum_{t=t_k}^{t_{k+1}-1} \sqrt{\frac{\mathbf{1}(N_{t_k}(s_t, a_t)>\tau)}{N_{t_k}(s_t,a_t)}} 
 	&\leq& \sum_{k=1}^{m} \sum_{t=t_k}^{t_{k+1}-1}  \sqrt{\frac{2}{N_{t}(s_t,a_t)+1}} = \sqrt{2} \sum_{t=1}^{T} (N_{t}(s_t, a_t)+1)^{-1/2} \\
	&\le& \sqrt{2} \sum_{s,a} \sum_{j=1}^{N_{T+1}(s,a)} j^{-1/2} \le \sqrt{2} \sum_{s,a} \int_{x=0}^{N_{T+1}(s,a)} x^{-1/2} \,dx \\
	&\le& \sqrt{2SA \sum_{s,a} N_{T+1}(s,a)} = \sqrt{2SAT}
\end{eqnarray*}

Note that since all rewards and transitions are absolutely constrained $\in [0,1]$ our regret
\begin{eqnarray*}
 	\min\{ \tau \sum_{k=1}^m \sum_{i=1}^{\tau} \min \{ \beta_k(s_{t_k+i},a_{t_k+i}), 1\} , T \}  
	&\le& \min \{ 2 \tau^2 SA + \tau \sqrt{28 S^2 A T \log(SAT)} , T \} \\
	&\le& \sqrt{2 \tau^2 SA T} + \tau \sqrt{28 S^2 A T \log(SAT)} \le \tau S \sqrt{30 AT \log(SAT)}
\end{eqnarray*}

Which is our required result.
\end{proof}

\end{document}